\def\BibTeX{{\rm B\kern-.05em{\sc i\kern-.025em b}\kern-.08em
    T\kern-.1667em\lower.7ex\hbox{E}\kern-.125emX}}
\newcommand{\R}{\mathbb{R}}
\def \real    { \mathbb{R} }
\newcommand{\e}{\begin{equation}}
\newcommand{\ee}{\end{equation}}
\newcommand{\en}{\begin{equation*}}
\newcommand{\een}{\end{equation*}}
\newcommand{\eqn}{\begin{eqnarray}}
\newcommand{\eeqn}{\end{eqnarray}}
\newcommand{\bmat}{\begin{bmatrix}}
\newcommand{\emat}{\end{bmatrix}}
\newcommand{\vct}[1]{\boldsymbol{#1}}
\newcommand{\mtx}[1]{\boldsymbol{#1}}
\newcommand{\T}{\mathrm{T}}
\newcommand{\trace}{\operatorname{trace}}
\newcommand{\set}[1]{\mathbb{#1}}
\DeclareMathOperator*{\argmin}{\text{arg~min}}
\newcommand{\ve}{\vct{e}}
\newcommand{\vh}{\vct{h}}
\newcommand{\vp}{\vct{p}}
\newcommand{\vu}{\vct{u}}
\newcommand{\vv}{\vct{v}}
\newcommand{\vx}{\vct{x}}
\newcommand{\vy}{\vct{y}}
\newcommand{\mA}{\mtx{A}}
\newcommand{\mB}{\mtx{B}}
\newcommand{\mD}{\mtx{D}}
\newcommand{\mH}{\mtx{H}}
\newcommand{\mI}{\mtx{I}}
\newcommand{\mL}{\mtx{L}}
\newcommand{\mM}{\mtx{M}}
\newcommand{\mP}{\mtx{P}}
\newcommand{\mQ}{\mtx{Q}}
\newcommand{\mR}{\mtx{R}}
\newcommand{\mU}{\mtx{U}}
\newcommand{\mV}{\mtx{V}}
\newcommand{\mX}{\mtx{X}}
\newcommand{\mSigma}{\mtx{\Sigma}}
\newcommand{\mId}{{\bf I}}
\newcommand{\setH}{\set{H}}
\newlength{\imgwidth}
\newcommand{\twoCol}[2]{\ifthenelse{\boolean{twoColVersion}} {#1} {#2} }
\def\@onedot{\ifx\@let@token.\else.\null\fi\xspace}
\def\etc{\emph{etc.}} 
\def\wrt{w.r.t.} 
\newtheorem{prop}{Proposition}
\newtheorem{lem}{Lemma}
\newtheorem{thm}{Theorem}
\newtheorem{dis}{Discussion}
\newtheorem{cor}{Corollary}
\begin{document}

\title{Rethinking Symmetric Matrix Factorization: \\A More General and Better Clustering Perspective}

\author{
	\IEEEauthorblockN{\ Mengyuan Zhang}
	\IEEEauthorblockA{
		\textit{Clemson University}\\
	\ 	Clemson, SC, USA\\
	\	\ mengyuz@clemson.edu
	}\and
	\IEEEauthorblockN{\ Kai Liu}
	\IEEEauthorblockA{ 
		\textit{Clemson University}\\
		\ \ Clemson, SC, USA\\
		\ \ \ kail@clemson.edu
	}
}

\maketitle

\begin{abstract}
  Nonnegative matrix factorization (NMF) is widely used for clustering with strong interpretability. Among general NMF problems, symmetric NMF is a special one that plays an important role in graph clustering where each element measures the similarity between data points. Most existing symmetric NMF algorithms require factor matrices to be nonnegative, and only focus on minimizing the gap between similarity matrix and its approximation for clustering, without giving a consideration to other potential regularization terms which can yield better clustering. In this paper, we explore factorizing a symmetric matrix that does not have to be nonnegative, presenting an efficient factorization algorithm with a regularization term to boost the clustering performance. Moreover, a more general framework is proposed to solve symmetric matrix factorization problems with different constraints on the factor matrices.
\end{abstract}

\begin{IEEEkeywords}
symmetric matrix factorization, generalized optimization framework, clustering
\end{IEEEkeywords}

\section{Introduction}

Nonnegative matrix factorization (NMF) \cite{lee1999learning} problem is formulated as the following: given a data matrix $\mX = [\vx_1, \vx_2, \dots, \vx_n] \in \real^{m\times n}_{+}$ containing $n$ observations, each observation denoted as $\vx_i$ is an $m$ dimensional vector, where $\real^{m\times n}_{+}$ denotes the set of $m \times n$ element-wise nonnegative matrices. NMF aims to find a lower-rank matrix approximation represented by:
\begin{equation}
	\label{eq:approx}
	\mX \approx \mU\mV^T,
\end{equation}
where $\mU = [\vu_1, \vu_2, \dots, \vu_k] \in \real^{m\times k}_{+}$, and $\mV = [\vv_1, \vv_2, \dots, \vv_k] \in \real^{n\times k}_{+}$. Typically squared Frobenius norm is used to measure the distance between $\mX$ and $\mU\mV^T$, so the objective in NMF is formulated as~\cite{kim2008nonnegative}:
\begin{equation}
	\label{eq:nmf}
	\min_{\mU,\mV \geq 0} \|\mX-\mU\mV^T\|^2_F,
\end{equation}
where $\| \mQ \|_F^2=\sum_{i,j}\mQ(i,j)^2$. Usually, $k$ is assumed to be  smaller than $\min \{m,n\}$, thus NMF can be regarded as a lower-rank approximation problem. 

Apparently, the NMF paradigm described above conducts clustering based on input data directly and assumes data is well linearly separable. However, for data that lies in a specific manifold (say a certain sphere or two moons), it will yield a poor result. Therefore, graph clustering is introduced to overcome the difficulty based on a matrix that measures the similarity between each data point~\cite{kuang2012symmetric}. The factorization of similarity matrix $\mA\in\real^{n\times n}_{+}$ will yield a lower-rank matrix $\mH\in\real^{n\times k}_{+}$ which plays a similar role as $\mV$ for cluster assignment~\cite{kuang2012symmetric,he2011symmetric,liu2021factor,liu2018multiple}. Specifically, symmetric NMF formulates the objective as: 
\begin{equation}
	\label{eq:snmf}
	\min_{\mH \geq 0} \|\mA-\mH\mH^T\|^2_F,
\end{equation}
where $k$ is the number of clusters. Compared to classical NMF, symmetric NMF is more flexible in terms of admitting any reasonable  measurement with mixed signs such as cosine similarity~\cite{wu2018pairwise,zhao2020snmfsmma,liu2018high}.

Previous work on symmetric NMF mostly requires that the matrix $\mH$ is nonnegative.  Therefore, even $\mA$ is not explicitly constrained to be non-negative, in practice, it is equivalent to setting the negative elements to be 0. When symmetric NMF is applied to graph clustering, the  result is directly obtained from $\mH$ while ignoring some other techniques such as graph regularization to promote clustering performance.
In this paper, we comprehensively study symmetric matrix factorization with its application in graph clustering. Our contribution is threefold:
\begin{itemize}
	\item We first extend vanilla symmetric NMF and study a more general case where there is no non-negative constraint on $\mH$ and interpret it in a meaningful manner. We propose a very efficient updating algorithm that can be extended to the non-negative case.
    \item A regularization term is added to boost the clustering performance.
	Instead of merely focusing on minimizing the objective $\|\mA-\mH\mH^T\|^2_F$, we impose graph regularization term to ensure that data points with higher similarity value share more similar cluster indicators, and vice versa, with  $\mH$  admitting mixed signs.
	\item We propose a general framework that can deal with symmetric matrix factorization problems with various constraints by learning the stepsize adaptively. 
\end{itemize}

This paper is organized as follows: 
Section \ref{sec:motivation} describes the motivation behind the problem. Section \ref{sec:formulation} provides the problem formulation and optimization methods, including one column-wise update method with fast speed, and one more general framework. Section \ref{sec:convergence} shows the convergence rate of our method. Section \ref{sec:exp} reports the experimental results on both image datasets and text datasets, followed by the Conclusion Section. Our code is available on GitHub.\footnote{\url{https://github.com/clair-lab/Symmetric-Matrix-Factorization}}

\section{Motivation}\label{sec:motivation}
\begin{figure}
	\centering
	\includegraphics[width=.9\linewidth]{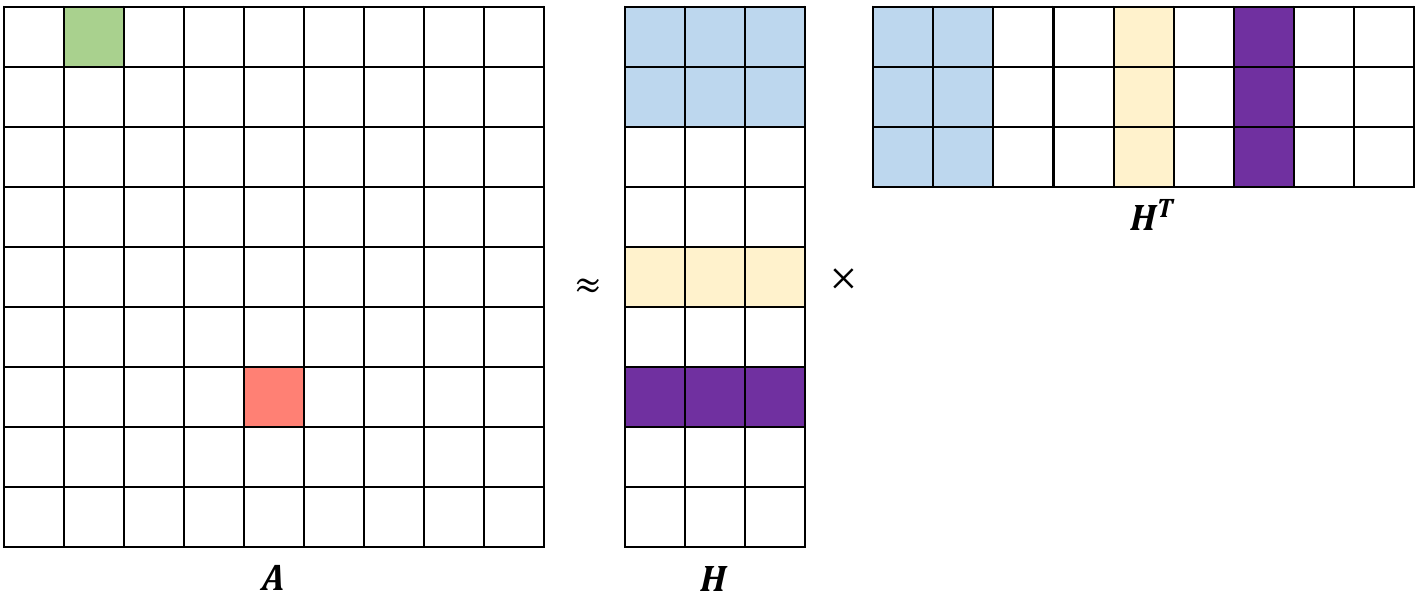}
	\caption{An illustration of symmetric matrix factorization on $\mA\in\mathbb{S}^n$. The green block indicates the similarity is high, the corresponding rows (first two) in $\mH$ should also be similar, so they are filled with the same color. The red block indicates the similarity is low, where the corresponding rows (5th and 7th, respectively) in $\mH$ should be dissimilar, filled with different colors. }
	\label{fig:matrix}
\end{figure}

In symmetric NMF for clustering, the objective function (\ref{eq:snmf}) is to measure the gap between the original similarity matrix $\mA$ and $\mH\mH^T$, where $\mH$ is the clustering assignment matrix with nonnegative constraint. However, most algorithms only aim to minimize the gap $\|\mA-\mH\mH^T\|^2_F$ while ignoring the potential over-fitting which may lower the clustering performance. Following the idea in graph regularization~\cite{cai2010graph,gu2009co}: data points that have high similarity (in $\mA$) should have closer clustering indicators (rows of $\mH$), and vice versa, which is demonstrated as Fig.~\ref{fig:matrix}. Accordingly, the regularization term is given by:
\begin{equation}
	\label{eq:regu}
	\min_{\mH \geq 0} \sum_{i,j = 1}^n \mA_{ij} \|\vh^i-\vh^j\|_2^2,
\end{equation}
where $\vh^i$ denotes $i$-th row in $\mH$.


Though theoretically, $\mA$ admits mixed signs, however, due to the nonnegative constraint on $\mH$, negative elements are treated as 0 after projection and play no role in learning.
%
Therefore, we remove the nonnegative constraint on $\mH$. Naturally, if $\mA_{ij}$ is negative, the indicators should be significantly different and $\mA_{ij} \|\vh^i-\vh^j\|_2^2$ remains negatively small, which is in accordance with the spirit of graph regularization. 
Negative element $h_{ij}$ denotes the \textbf{unlikelihood} of $i$-th data belonging to $j$-th cluster while positive represents the very \textbf{likelihood}. Therefore we formulate the objective with regularization as:
\begin{equation}
	\label{eq:smf}
	\min_{\mH}\|\mA-\mH\mH^T\|^2_F + \lambda \sum_{i,j = 1}^n \mA_{ij}\|\vh^i-\vh^j\|_2^2,
\end{equation}
where $\lambda$ is the tuning regularization parameter.

To verify whether the regularization term can help to boost the clustering performance, a pilot experiment is conducted on  COIL-20 data set~\cite{coil20}. Fig.~\ref{fig:reg} shows clustering accuracy~\cite{10.1145/860435.860485} comparison obtained from the same initialization and constraint (admitting mixed signs on $\mH$) with the only difference being the existence of the regularization term. We can see that the clustering performance is significantly improved by incorporating the regularization term into the objective, especially when the number of clusters grows larger.
\begin{figure}
	\centering
	\includegraphics[width=.8\linewidth]{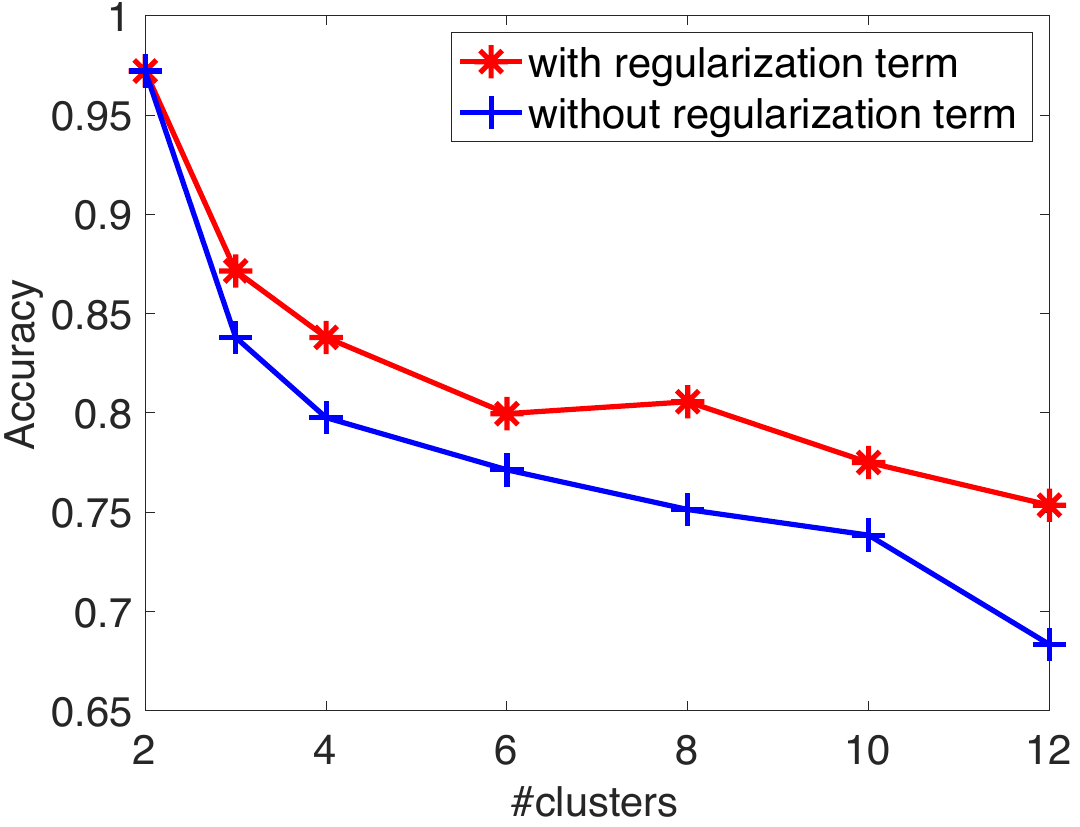}
	\caption{Accuracy of clustering comparison on COIL20 data set with different cluster numbers.}
	\label{fig:reg}
\end{figure}

Moreover, regarding $\mH$, there can be different constraints on it. One can verify that if $\mH^T\mH=\mI$, then it becomes spectral clustering~\cite{ng2002spectral,bach2004learning}; if all is zero except one element is 1 in each row of $\mH$, then it is $K$-means; if each row is nonnegative and the sum is 1~\cite{li2006relationships,schmidt2009bayesian}, then it is the probability distribution to each cluster. Instead of proposing ad-hoc algorithms with various constraints, we will systematically address the problem by providing a general framework that can obtain optimal solutions with updates efficiently.

\section{Formulation and Algorithm}\label{sec:formulation}
In this section, we propose our algorithm to solve (\ref{eq:smf}) which enjoys faster convergence than its existing counterparts. 
\subsection{Reformulation}
By noting the regularization term is in row-wise form, we are to reformulate the objective in a more compact way as:
\begin{thm}\label{thm:reformulation}
(\ref{eq:smf}) 
is equivalent to:
\begin{equation}
	\label{eq:sim}
	\min_{\mH} \|\mA-\lambda \mL - \mH\mH^T\|^2_F,
\end{equation}
where $\mL$ is the Laplacian matrix given by $\mL := \mD-\mA \in \real^{n \times n}$ and 
$\mD$ is the degree matrix, which is diagonal defined as: $\mD(i,i)=\sum_j(A(i,j))$.
\end{thm}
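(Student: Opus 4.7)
The plan is to reduce both sides to a common expression, up to an additive constant that is independent of $\mH$, by exploiting the standard Laplacian identity. The main ingredient is the well-known graph-Laplacian quadratic form
$$\sum_{i,j=1}^n \mA_{ij}\|\vh^i-\vh^j\|_2^2 \;=\; 2\,\trace(\mH^T \mL \mH),$$
which follows by expanding $\|\vh^i-\vh^j\|_2^2 = \|\vh^i\|_2^2 + \|\vh^j\|_2^2 - 2\langle\vh^i,\vh^j\rangle$, collecting the first two terms into $2\sum_i \mD_{ii}\|\vh^i\|_2^2 = 2\trace(\mH^T \mD \mH)$ (using that $\mD_{ii}=\sum_j \mA_{ij}$ because $\mA$ is symmetric), and collecting the cross term into $-2\trace(\mH^T \mA \mH)$. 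I would state and verify this identity as the first step.

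Next I would expand the objective of (\ref{eq:smf}) using $\|\mX\|_F^2 = \trace(\mX^T\mX)$ and the cyclic property of trace:
$$\|\mA-\mH\mH^T\|_F^2 \;=\; \|\mA\|_F^2 - 2\,\trace(\mH^T\mA\mH) + \|\mH\mH^T\|_F^2.$$
Adding the regularizer and using the Laplacian identity, the combined objective becomes
$$\|\mA\|_F^2 - 2\,\trace\!\bigl(\mH^T(\mA-\lambda \mL)\mH\bigr) + \|\mH\mH^T\|_F^2.$$
In parallel I would expand (\ref{eq:sim}) the same way:
$$\|\mA-\lambda\mL-\mH\mH^T\|_F^2 \;=\; \|\mA-\lambda\mL\|_F^2 - 2\,\trace\!\bigl(\mH^T(\mA-\lambda\mL)\mH\bigr) + \|\mH\mH^T\|_F^2.$$

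The final step is to observe that the two expressions agree in every term that depends on $\mH$, differing only by the constant $\|\mA\|_F^2 - \|\mA-\lambda\mL\|_F^2$, which has no bearing on the minimizer. Hence the two optimization problems are equivalent in the sense of sharing the same argmin and the same optimal $\mH$. There is no real obstacle here; the only mildly delicate point is keeping track of the symmetry of $\mA$ (so that $\trace(\mA\mH\mH^T)=\trace(\mH^T\mA\mH)$) and of $\mL$, which is automatic from $\mL=\mD-\mA$ with $\mD$ diagonal.
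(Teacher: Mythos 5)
Your proposal is correct and follows essentially the same route as the paper: invoke the Laplacian quadratic-form identity $\sum_{i,j}\mA_{ij}\|\vh^i-\vh^j\|_2^2 = 2\trace(\mH^T\mL\mH)$ and then expand the squared Frobenius norms to see that the two objectives differ only by a constant independent of $\mH$. The only difference is cosmetic — you verify the Laplacian identity directly and expand both objectives symmetrically, while the paper cites the identity and expands only (\ref{eq:sim}) — so there is nothing further to compare.
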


\begin{proof}[Proof of Theorem \ref{thm:reformulation}]
Following \cite{cai2010graph}, we have:
\begin{equation}\label{eq:p1}
\sum_{i,j=1}^n \mA_{ij}\|\vh^i-\vh^j\|_2^2 =2 tr(\mH^T\mL\mH).
\end{equation}
Therefore, (\ref{eq:smf}) is equivalent to
\begin{equation}
	\label{eq:p2}
	\min_{\mH} \|\mA- \mH\mH^T\|^2_F + 2\lambda tr(\mH^T\mL\mH).
\end{equation}
By expanding (\ref{eq:sim}), we obtain:
\begin{equation}\label{eq:p3}
\begin{split}
&\min_{\mH} \|\mA-\lambda \mL- \mH \mH^T\|^2_F\\
 =&\min_{\mH} tr(\mA-\lambda \mL - \mH \mH^T)^T(\mA-\lambda \mL - \mH \mH^T)\\
 =&\min_{\mH} \|\mA- \mH \mH^T\|^2_F + 2\lambda tr(\mH^T \mL \mH)\\
  &+\textit{ terms irrelevant to }\mH.
\end{split}
\end{equation}
Thus we conclude that solution to (\ref{eq:smf}) is the same as to (\ref{eq:sim}) in terms of optimizing $\mH$.
\end{proof}

\subsection{Column-Wise Fast Update}
Given the new formulation in (\ref{eq:sim}), we now turn to provide detailed updating rule for $\mH$. 

We first denote $\mA - \lambda \mL$ as $\mM$, obviously $\mM$ is still symmetric, though not necessarily positive definite. The optimization problem thereafter can be reformulated as: 
\begin{equation}
	\label{eq:m}
	\min_{\mH} f(\mH)=\|\mM- \mH\mH^T\|^2_F.
\end{equation}
The above problem, theoretically speaking, has closed solutions, though not unique (unless $\mM$ is negative semi-definite). One can see that if $\mH$ is an optimal solution, then $\mH\mR$ admits the same objective as long as $\mR\mR^T=\mI$ ($\mR\in \real^{k \times k}$). Apparently, if $\mM$ is positive definite, from Eckart-Young-Mirsky theorem, $\mH(:,i)=\pm\sqrt{\sigma_i}\vv_i$ will obtain the minimal where $\sigma_i$ is the top $i$-th eigenvalue and $\vv_i$ is the corresponding eigenvector of $\mM$. When $\mM$ is negative definite, then $\mH^*=0$ is unique solution. If $\mM$ has mixed signs in the eigenvalues, 
then $\mH(:,i)=\pm\sqrt{\max\{\sigma_i,0\}}\vv_i$. However, one significant disadvantage of the above method is when the size of $\mM$ is very large, conducting eigenvalue decomposition is extremely computationally demanding. As a contribution, in our paper, we seek an alternative that is more applicable in practice. In light of the non-convexity of (\ref{eq:m}), where optimizing $\mH$ is challenging due to its high order, we turn to optimize:
\begin{equation}
	\label{eq:split}
	\min_{\mH,\mP} \|\mM- \mH\mP^T\|^2_F+\lambda\|\mH-\mP\|^2_F,
\end{equation}
where as long as $\lambda$ is sufficiently large, hopefully, we have $\mH^*=\mP^*$, which is not difficult to obtain  optimal solutions by utilizing any practical method such as alternating minimization.  The following theorem will give a more specific bound:

\begin{thm}
	Let ($\mH^*, \mP^*$) be critical points of (\ref{eq:split}), where $\sigma_n(\cdot)$ denotes the $n$-th largest eigenvalue and $\lambda>\frac{\|\mH^*\mP^{*T}\|_F-\sigma_n(\mM)}{2}$, then $\mH^* = \mP^*$ and $\mH^*$ is a critical point of original problem (\ref{eq:m}).
	\label{thm:hals}
\end{thm}

\begin{proof}[Proof of Theorem \ref{thm:hals}] 
	We first introduce the following lemma which is very useful for later proof~\cite{zhu2018dropping,li2021provable}.
	\begin{lem}\label{lem:ineqPSD}
		For any symmetric $\mA\in\R^{n\times n}$ and positive semi-definite matrix $\mB\in\R^{n\times n}$, we have:
		\[
		\sigma_n(\mA)\trace(\mB)\leq \trace\left(\mA\mB\right) \leq \sigma_1(\mA)\trace(\mB),
		\]
		where $\sigma_i(\mA)$ is the $i$-th largest eigenvalue of $\mA$.
	\end{lem}
Now we turn to check the sub-differential of $f$ at critical point $(\mH^*,\mP^*)$ which is:
\e\begin{split} \label{eq:subdifferential}
	&\partial_{\mH^*}f(\mH^*,\mP^*) = 2[(\mH^*\mP^{*\T} - \mM)\mP^* + \lambda (\mH^* - \mP^*)]=0,\\
	&\partial_{\mP^*}f(\mH^*,\mP^*) = 2[(\mP^*\mH^{*\T} - \mM)\mH^* - \lambda (\mH^* - \mP^*)]=0.
\end{split}\ee
By subtracting the second line from the first, we have:
\e
	\label{keyeq}
	(2\lambda\mI+\mM)(\mH^*-\mP^*)=\mP^*\mH^{*\T}\mH^*-\mH^*\mP^{*\T}\mP^*.
\ee
By taking the inner product $\mH^\star - \mP^\star$ on both sides:
\e
\begin{aligned}
	&\langle 2\lambda \mId + \mM,(\mH^*-\mP^*)(\mH^*-\mP^*)^\T\rangle \\= &\langle  \mP^*\mH^{*\T}\mH^*-\mH^*\mP^{*\T}\mP^*, \mH^*-\mP^* \rangle.
\end{aligned}
\label{eq: U V inner product}\ee
Applying Lemma \ref{lem:ineqPSD} on the LHS, we have:
\begin{equation}\label{left}
	\langle 2\lambda \mId + \mM,(\mH^*-\mP^*)(\mH^*-\mP^*)^\T\rangle \\ \ge (2\lambda+\sigma_n(\mM))\|\mH^*-\mP^*\|^2_F,
\end{equation}
while applying Lemma \ref{lem:ineqPSD} on the other side we have:
\e\begin{aligned}
\label{right}
&\langle  \mP^*\mH^{*\T}\mH^*-\mH^*\mP^{*\T}\mP^*, \mH^*-\mP^* \rangle\\
=& \left\langle \frac{\mP^\star\mH^{\star\T} + \mH^\star\mP^{\star\T}}{2}, (\mH^\star - \mP^\star)(\mH^\star - \mP^\star)^\T \right\rangle \\&- \frac{\left\| \mH^\star\mP^{\star\T}- \mP^\star\mH^{\star\T}\right\|_F^2}{2}\\
 \leq &\left\langle \frac{\mP^\star\mH^{\star\T} + \mH^\star\mP^{\star\T}}{2}, (\mH^\star - \mP^\star)(\mH^\star - \mP^\star)^\T \right\rangle\\
\leq &\sigma_1\left( \frac{\mP^\star\mH^{\star\T} + \mH^\star\mP^{\star\T}}{2} \right)\|\mH^\star - \mP^\star\|_F^2\\
\leq &\| \frac{\mP^\star\mH^{\star\T} + \mH^\star\mP^{\star\T}}{2} \|_F\|\mH^\star - \mP^\star\|_F^2\\
\leq &\|\mH^*\mP^{*\T}\|_F\|\mH^\star - \mP^\star\|_F^2.
\end{aligned}
\ee
Combining the above two equations we have:
\e\label{ineq}
(2\lambda+\sigma_n(\mM))\|\mH^*-\mP^*\|^2_F\le \|\mH^*\mP^{*\T}\|_F\|\mH^\star - \mP^\star\|_F^2. 
\ee
Thus, if $\lambda>\frac{\|\mH^*\mP^{*T}\|_F-\sigma_n(\mM)}{2}$, then $\mH^*=\mP^*$ and any critical points satisfying (\ref{eq:subdifferential}) are also those for (\ref{eq:m}).
\end{proof}
\noindent The following lemma gives a bound for $\|\mH^*\mP^{*T}\|_F$.
\begin{lem} For (\ref{eq:split}), suppose the objective decreases with initialization $\mP_0=\mH_0$, then for any $k\geq 0$,  the iterate $(\mH_k,\mP_k)$ generated by any algorithm satisfies:
	\e
		\|\mH_k\mP_k^\T\|_F\leq\|\mM-\mH_0\mP_0^\T\|_F+\|\mM\|_F.
\label{eqn:bound}\ee	\label{lem:bound:iterat}
\end{lem}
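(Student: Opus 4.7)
The plan is to exploit the monotonicity of the objective together with the special initialization $\mP_0 = \mH_0$ to first control $\|\mM - \mH_k\mP_k^\T\|_F$, and then use the triangle inequality to pass from this residual bound to a bound on $\|\mH_k\mP_k^\T\|_F$.

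First, I would write out the objective of (\ref{eq:split}) at the initialization:
\begin{equation*}
\|\mM - \mH_0\mP_0^\T\|_F^2 + \lambda\|\mH_0 - \mP_0\|_F^2 = \|\mM - \mH_0\mP_0^\T\|_F^2,
\end{equation*}
where the regularization term vanishes because $\mP_0 = \mH_0$. By the hypothesis that the algorithm produces a non-increasing objective sequence, at any iterate $k \geq 0$:
\begin{equation*}
\|\mM - \mH_k\mP_k^\T\|_F^2 + \lambda\|\mH_k - \mP_k\|_F^2 \leq \|\mM - \mH_0\mP_0^\T\|_F^2.
\end{equation*}
Since the regularization term on the left is nonnegative, I can drop it, obtaining $\|\mM - \mH_k\mP_k^\T\|_F \leq \|\mM - \mH_0\mP_0^\T\|_F$ after taking square roots.

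The final step is an application of the triangle inequality in the Frobenius norm:
\begin{equation*}
\|\mH_k\mP_k^\T\|_F \leq \|\mM - \mH_k\mP_k^\T\|_F + \|\mM\|_F \leq \|\mM - \mH_0\mP_0^\T\|_F + \|\mM\|_F,
\end{equation*}
which is exactly the claimed bound.

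There is no real obstacle here; the only subtle point is recognizing that the lemma crucially relies on the equal initialization $\mP_0 = \mH_0$ (so the initial regularizer is zero and the entire bound degenerates to a residual bound plus $\|\mM\|_F$) and on the monotone decrease of the joint objective, not separately on either term. The result therefore holds uniformly across any optimization scheme applied to (\ref{eq:split}) satisfying these two mild conditions, which is exactly what is needed to combine with Theorem \ref{thm:hals} and verify the $\lambda$-condition $\lambda > (\|\mH^*\mP^{*\T}\|_F - \sigma_n(\mM))/2$ a priori from the initialization alone.
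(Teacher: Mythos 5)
Your proof is correct and follows essentially the same route as the paper's: use monotone decrease from the initialization $\mP_0=\mH_0$ (where the regularizer vanishes) to bound the residual $\|\mM-\mH_k\mP_k^\T\|_F$ by $\|\mM-\mH_0\mP_0^\T\|_F$, then apply the triangle inequality. Your write-up is in fact slightly more explicit than the paper's about why the initial objective reduces to the residual term alone.
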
 
\begin{proof}[Proof of Lemma \ref{lem:bound:iterat}]
	By the assumption that the algorithm decreases the objective function, we have:
	\begin{align*}
		\|\mM - \mH_k \mP_k^{\T } \|_F^2 + \lambda \|\mH_k-\mP_k \|_F^2
		&\leq  \|\mM - \mH_0\mH_0^{\T} \|_F^2\\
		\implies	\|\mM - \mH_k \mP_k^{\T } \|_F^2&\leq  \|\mM - \mH_0\mH_0^{\T} \|_F^2\\
		\implies \| \mH_k \mP_k^{\T }\|_F\le \|\mM\|_F+&\|\mM - \mH_k \mP_k^{\T } \|_F\\
		\le \|\mM\|_F+&\|\mM - \mH_0 \mP_0^{\T } \|_F.
	\end{align*}
	\label{lem:bound:iterate}
\end{proof}
\begin{cor}\label{col:lambda}
	If $\lambda>\frac{1}{2}(\|\mM\|_F+\|\mM-\mH_0\mP_0^{T}\|_F-\sigma_n(\mM))$ and $\mP_0=\mH_0$, then any algorithm decreases the objective in (\ref{eq:split}) will result in $\mP^*=\mH^*$.
\end{cor}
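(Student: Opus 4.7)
The plan is to obtain Corollary~\ref{col:lambda} as an immediate consequence of the preceding Theorem~\ref{thm:hals} and Lemma~\ref{lem:bound:iterat}. Theorem~\ref{thm:hals} already delivers $\mH^*=\mP^*$ under the condition $\lambda>\frac{1}{2}(\|\mH^*\mP^{*\T}\|_F-\sigma_n(\mM))$, but this threshold is stated in terms of the unknown critical point. The whole job of the corollary is to replace $\|\mH^*\mP^{*\T}\|_F$ by a quantity that depends only on $\mM$ and on the initialization, so that the sufficient condition on $\lambda$ can be checked a priori.

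First, I would invoke Lemma~\ref{lem:bound:iterat} with the assumption $\mP_0=\mH_0$ and the hypothesis that the algorithm is monotonically decreasing on the objective of (\ref{eq:split}). The lemma yields the uniform bound
\begin{equation*}
\|\mH_k\mP_k^{\T}\|_F \leq \|\mM\|_F + \|\mM-\mH_0\mP_0^{\T}\|_F
\end{equation*}
for every iterate $k\geq 0$. Since the bound is uniform in $k$ and the Frobenius norm on products is continuous in its arguments, any limit point $(\mH^*,\mP^*)$ produced by such an algorithm inherits the same inequality, giving $\|\mH^*\mP^{*\T}\|_F \leq \|\mM\|_F + \|\mM-\mH_0\mP_0^{\T}\|_F$.

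Second, plugging this inequality into the threshold of Theorem~\ref{thm:hals} shows that the hypothesis $\lambda>\frac{1}{2}(\|\mM\|_F + \|\mM-\mH_0\mP_0^{\T}\|_F - \sigma_n(\mM))$ forces $\lambda>\frac{1}{2}(\|\mH^*\mP^{*\T}\|_F - \sigma_n(\mM))$. Applying Theorem~\ref{thm:hals} at the critical point then concludes $\mH^*=\mP^*$, which is what the corollary claims. There is no real obstacle here beyond the mild observation about transferring the bound from iterates to the limit point, so the proof is essentially a three-line chaining of the two previously established results.
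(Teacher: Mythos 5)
Your proposal is correct and matches the paper's own argument, which likewise obtains the corollary by combining the iterate bound of Lemma~\ref{lem:bound:iterat} with the threshold condition of Theorem~\ref{thm:hals}. Your explicit remark about transferring the uniform bound from the iterates to the limit point is a small but welcome clarification that the paper leaves implicit.
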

\begin{proof}
	This is established  by Lemma~\ref{lem:bound:iterat} and Theorem~\ref{thm:hals}.
\end{proof}

\begin{dis}
	In our case, $\mM$ is symmetric, not necessarily positive semi-definite (PSD). To compute $\sigma_n(\mM)$, which can be negative, it is very time-consuming if eigenvalue decomposition is utilized given its complexity level being $\mathcal{O}(n^3)$ when $n$ is large. Therefore, we can divide it into the following cases:
	\begin{itemize}
		\item when $\mM$ is PSD, then $\lambda>\frac{1}{2}(\|\mM\|_F+\|\mM-\mH_0\mP_0^{T}\|_F)$ will naturally satisfy the requirement, admitting desired optimal solutions.
		\item when $\mM$ is not PSD, we can first compute the leading eigenvalue ($t$) by power iteration or Lanczos method~\cite{kuczynski1992estimating}, which gives $\mathcal{O}(log(n)/k)$ and $\mathcal{O}((log(n)/k)^2)$ convergence rate, respectively~\cite{wright2022high}. By setting $\lambda>\frac{1}{2}(\|\mM\|_F+\|\mM-\mH_0\mP_0^{T}\|_F+|t|)$, one can verify that it will obtain desired solution with significantly reduced complexity than eigenvalue decomposition.
	\end{itemize}
	\end{dis}

To end this proof part, we propose an efficient algorithm that will \textbf{decrease the objective monotonically}. 

By noticing $\mH\mP^T=\vh_i\vp_i^T+\sum_{j\neq i}\vh_j\vp_j^T$, we can optimize $\mH$ and $\mP$  column by column. 
Different from the column-wise update process proposed in other studies~\cite{shi2017inexact, vandaele2016efficient}, there is no assumption needed in our update algorithm, the update formula in our method is much more straightforward and a detailed proof of sufficient decrease in objective function with our method is provided. 

Denote $\overline{\mM}=\mM-\sum_{j\neq i}\vh_j\vp_j^T$, we have:
\begin{equation}
	\begin{aligned}
	\vh_i^+=&\argmin_{\vh_i} \|\overline{\mM} - \vh_i\vp_i^\T\|^2 + \lambda\|\vh_i - \vp_i\|_2^2\\
	=&\argmin_{\vh_i} (\|\vp_i\|^2+\lambda)\vh_i^2-2\langle \vh_i,\lambda\vp_i+\overline{\mM}\vp_i\rangle\\
	=&\frac{(\overline{\mM}+\lambda\mI)\vp_i}{\|\vp_i\|^2+\lambda}.
\end{aligned}
\end{equation}

Moreover, by noticing the strongly convexity ($\alpha=2(\|\vp_i\|^2+\lambda)$) of objective \wrt \  $\vh$, one have:
\begin{equation}
	\label{sufficient_decrease_h}\begin{aligned}
	&f(\vh_i,\vp_i)-f(\vh_i^+,\vp_i)\\\ge &\langle \nabla_{\vh_i} f(\vh_i^+,\vp_i),\vh_i-\vh_i^+\rangle+ \frac{\alpha}{2}\|\vh_i-\vh_i^+\|^2\\
	\ge &\lambda\|\vh_i-\vh_i^+\|^2,
\end{aligned}
\end{equation}
which indicates a sufficient decrease by updating $\vh_i$. Similarly, one can have the same conclusion while updating $\vp_i$. 

\begin{algorithm}[tb]
	\caption{Efficient update to optimize (\ref{eq:split})}
	\label{alg:alg2}
	\begin{algorithmic}
		\State {\bfseries Initialization:} $\mH_0=\mP_0\in\real^{n\times k}$.
		\While{ not converge}
		\For {$i = 1:k$}
		\State 
		$\overline{\mM} = \mM - \sum_{j \neq i}\vh_j\vp_j^T$.
		\State $\vh_i^{+} = \frac{(\overline{\mM}+\lambda\mI)\vp_i}{\|\vp_i\|^2 +\lambda}, \ \ \vp_i^{+} = \frac{(\overline{\mM}+\lambda\mI)\vh_i^+}{\|\vh^+_i\|^2 +\lambda}$.
		\EndFor
		\EndWhile
		\State {\bfseries Output:} $\mH^* = \mP^*$.
	\end{algorithmic}
\end{algorithm}

Fig.~\ref{fig:hals} shows the convergence curves of Algorithm\ref{alg:alg2}, SymANLS ~\cite{zhu2018dropping} and alternating direction method of multipliers (ADMM)~\cite{boyd2011distributed} when solving symmetric matrix factorization problem. From the figure, we see Algorithm \ref{alg:alg2} converges very fast which indicates its superiority. 
\begin{figure}
	\centering
	\includegraphics[width=0.8\linewidth]{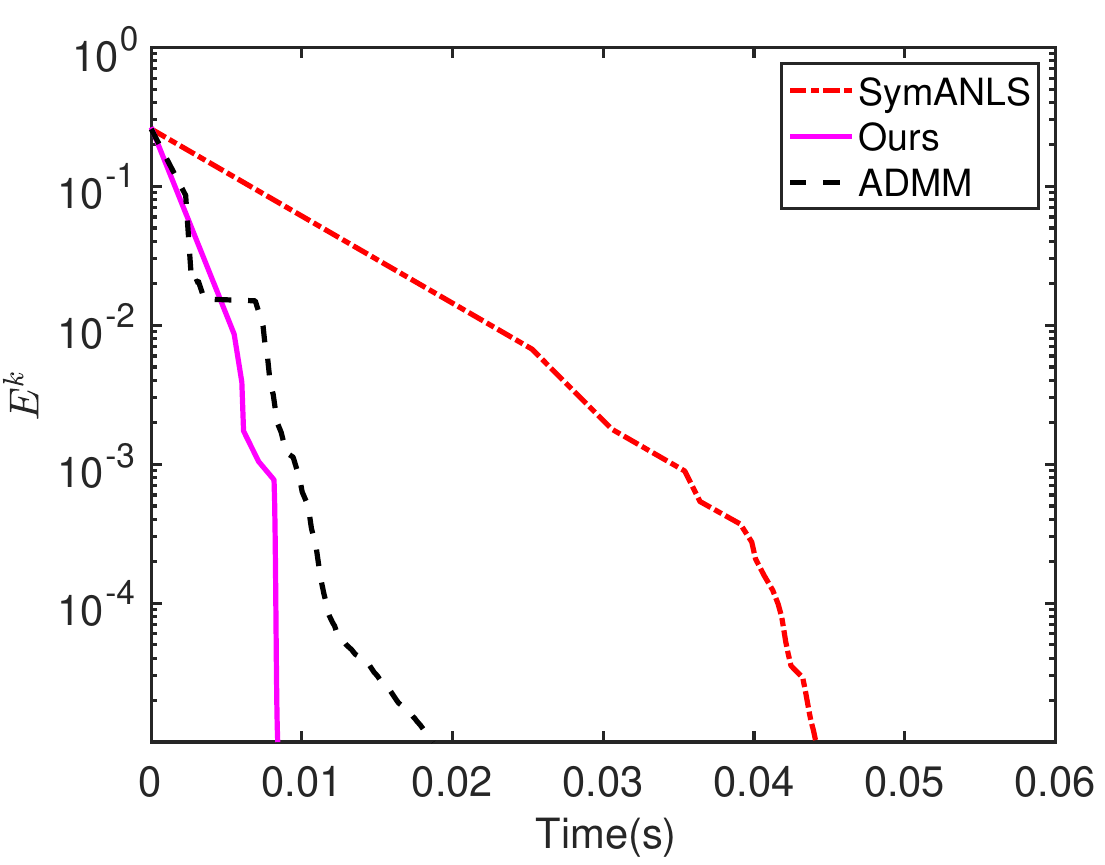}
	\caption{Typical convergence curve which shows the superiority of our proposed method.}
	\label{fig:hals}
\end{figure}

\subsection{A More General Framework for Constrained Optimization}
Though the above subsection describes a simple, yet very efficient algorithm to obtain an optimal solution, still it can't deal with most constraint problems such as $\|\vh\|_2=1, \|\vh\|_0\le s$, \etc~\cite{liu2021spherical}, where different constraint indicate various meanings such as sparsity, probability distribution \etc \ In this subsection we propose a more general framework to solve (\ref{eq:m}).

Recall the relationship between second-order Taylor expansion and gradient descent:
\begin{equation}
	\label{eq:expan}
	f(\vy) \approx f(\vx) + \langle \nabla f(\vx), \vy-\vx\rangle + \frac{1}{2} \langle \vy-\vx,\nabla^2 f(\vx)(\vy-\vx)\rangle.
\end{equation}
If we replace Hessian $\nabla^2 f(\vx)$ with $\frac{1}{t}\mI$, then $\min f(\vy)$ is to minimize $ \|\vy-\vx+t\nabla f(\vx)\|^2_2$, where $t$ is the step size in gradient descent method. It can be verified that:
\begin{equation}\label{eq:exph}
\nabla f(\mH)= 4(\mH\mH^T\mH - \mM\mH).
\end{equation}
By invoking (\ref{eq:expan}) to update $\mH^+$, we get:
\begin{equation}\label{eq:h}
\begin{aligned}
\mH^+ &= \argmin_{\mH'\in\setH} f(\mH') \\
&= \argmin_{\mH'\in\setH} f(\mH)  + \langle\nabla f(\mH), \mH'-\mH\rangle  + \frac{1}{2t} \|\mH'-\mH\|_F^2\\
&= \argmin_{\mH'\in\setH}  \frac{1}{2t} \|\mH' - (\mH - t \nabla f(\mH))\|_F^2\\
&= \mathcal{P}_{\setH}(\mH- t \nabla f(\mH)).
\end{aligned}
\end{equation}
$\setH$ denotes the feasible set satisfying the constraint. Stepsize $t$ in the above update rule should be  relatively small  to avoid gradient explosion. However, if it is too small, the convergence becomes slow, which should be avoided as well. As a contribution, we propose a method with \textbf{adaptive stepsize} which will make the objective decrease monotonically.
\begin{algorithm}[tb]
	\caption{Optimize (\ref{eq:m}) where $\mH\in\setH$}
	\label{alg:alg}
	\begin{algorithmic}
		\State {\bfseries Input:} $\mM$ = $\mA - \lambda \mL$.
		\State {\bfseries Initialization:} $\mH_0\in\setH$, $i$ = 0.
		\While{$i < K$}
		\State $\nabla f(\mH_i) = 4(\mH_i \mH_i^T \mH_i - \mM\mH_i)$.
		\State $L_i = 4\sigma_{max}(\mH_i\mH_i^T - \mM) + 8 \sigma_{max}(\mH_i^T\mH_i)$.
		\State $\mH_{i+1} = \mathcal{P}_\setH(\mH_i - t \nabla f(\mH_i))$, where $t=\frac{1}{2L_i}$.
		\State $i = i+1$.
		\EndWhile
		\State {\bfseries Output:} $\mH_K$.
	\end{algorithmic}
\end{algorithm}

Algorithm \ref{alg:alg} provides a generalized framework to solve any symmetric matrix factorization with different constraints. 
Below we provide some concrete examples:
\begin{itemize}
	\item \emph{Example I}: Nonnegative constraint $\mH \geq 0$\footnote{One can verify that Algorithm \ref{alg:alg2} can also work if we set every column nonnegative, which can simply be obtained via $\vh_i^{+} = \max\{\frac{(\overline{\mM}+\lambda\mI)\vp_i}{\|\vp_i\|^2+\lambda},0\}$, $\vp_i^{+} =\max\{ \frac{(\overline{\mM}+\lambda\mI)\vh_i^+}{\|\vh^+_i\|^2+\lambda},0\}$. }:
	\begin{equation}\label{eq:hn}
		\begin{aligned}
			\mH^+ = \max\{\mH- t \nabla f(\mH), 0\}.
		\end{aligned}
	\end{equation}
	
	\item \emph{Example II}: Unit constraint $\|\vh\|_2 = 1$:
	\begin{equation}\label{eq:unit}
		\begin{aligned}
			\vh^+=\frac{\vh-t\nabla f(\vh)}{\|\vh-t\nabla f(\vh)\|_2}.
		\end{aligned}
	\end{equation}
	
	\item \emph{Example III}: Sparsity constraint $\|\vh\|_0 \le s$:
	
	WLOG, assume the top $s$ entry with maximum magnitude in $\vh$ is indexed as $[1,s]$, then~\cite{liu2019spherical}
	\begin{equation}\label{eq:hs}
		\begin{aligned}
			\vh^+ = 
			\begin{cases}
				(\vh- t \nabla f(\vh))_{j} & \text{if } j \in [1,s],\\  
				0  & \text{otherwise}.
			\end{cases}
		\end{aligned}
	\end{equation}
	
	\item \emph{Example IV}: Orthogonality Constraint $\mH^T\mH = \mI$:
	\begin{equation}\label{eq:ortho}
		\begin{aligned}
			\mH^+&=\mU^T\mV,\\where \  [\mU,\mSigma,\mV]&=svd(\mH- t \nabla f(\mH)).
		\end{aligned}
	\end{equation}

	\item \emph{Example V}: $\ell_1$-norm constraint on $\|\vh\|_1\le\alpha$~\cite{beck2017first}.
	By denoting $\mathcal{T}_\lambda(\vh)=[\vh-\lambda\ve]_+\odot sgn(\vh)$, we have:
	\begin{equation}\label{eq:l1}
		\begin{aligned}
			\vh^+ = 
			\begin{cases}
				\vh& \text{if } \|\vh\|_1\le \alpha,\\  
				\mathcal{T}_{\lambda^*}(\vh)  & \text{otherwise},
			\end{cases}
		\end{aligned}
	\end{equation}
	where $\lambda^*$ is positive root for $\|\mathcal{T}_\lambda(\vh)\|_1=\alpha$ which can be solved within $\mathcal{O}(nlog(n))$~\cite{duchi2008efficient}.
\end{itemize}
%

\section{Convergence Analysis}\label{sec:convergence}
In the last section we mentioned the step size $t$ should not be too large or small, and in this section, we will determine the best $t$ in each update which guarantees the objective decreases monotonically by introducing the following lemma to begin:
\begin{lem}
For a function $f$ with a Lipschitz continuous gradient $L$, if
$
\|\nabla f(\vx) - \nabla f(\vy)\|_2 \leq L\|\vx - \vy\|_2
$ then 
$
f(\vy) \leq f(\vx) + \langle\nabla f(\vx), \vy-\vx\rangle + \frac{L}{2}\|\vy-\vx\|_2^2
$.
\label{lemma:L}
\end{lem}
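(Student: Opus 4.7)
The plan is to prove this classical ``descent lemma'' by writing $f(\vy) - f(\vx)$ as a line integral of the gradient along the segment joining $\vx$ to $\vy$, then using the Lipschitz hypothesis to bound the deviation of $\nabla f$ from its value at $\vx$ along that segment.

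First I would invoke the fundamental theorem of calculus (which is applicable because Lipschitz continuity of $\nabla f$ implies $f$ is continuously differentiable) to write
\[
f(\vy) - f(\vx) = \int_0^1 \nabla f(\vx + \tau(\vy - \vx))^T (\vy - \vx)\, d\tau.
\]
Subtracting the linear term $\nabla f(\vx)^T(\vy - \vx) = \int_0^1 \nabla f(\vx)^T(\vy-\vx)\, d\tau$ from both sides reduces the target inequality to bounding
\[
\int_0^1 \bigl[\nabla f(\vx + \tau(\vy-\vx)) - \nabla f(\vx)\bigr]^T (\vy - \vx)\, d\tau
\]
from above by $\tfrac{L}{2}\|\vy - \vx\|_2^2$.

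Next I would apply Cauchy--Schwarz inside the integral and then invoke the Lipschitz hypothesis with the two points $\vx$ and $\vx + \tau(\vy - \vx)$, yielding
\[
\|\nabla f(\vx + \tau(\vy-\vx)) - \nabla f(\vx)\|_2 \leq L\,\tau\,\|\vy - \vx\|_2.
\]
Plugging this into the integrand and integrating gives $\int_0^1 L\tau\|\vy-\vx\|_2^2\, d\tau = \tfrac{L}{2}\|\vy-\vx\|_2^2$, completing the bound.

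There is no substantive obstacle here: the argument is a textbook one, and the only care needed is to justify the integral representation of $f(\vy) - f(\vx)$, which follows immediately from the assumed Lipschitz continuity (hence continuity) of $\nabla f$. I expect the paper's proof to follow exactly this template, possibly in a more condensed form.
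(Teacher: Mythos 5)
Your argument is correct and complete: it is the standard proof of the descent lemma via the fundamental theorem of calculus along the segment from $\vx$ to $\vy$, followed by Cauchy--Schwarz and the bound $\|\nabla f(\vx+\tau(\vy-\vx))-\nabla f(\vx)\|_2\le L\tau\|\vy-\vx\|_2$, integrated to give the factor $\tfrac{L}{2}$. The paper itself offers no proof of this lemma---it is invoked as a classical fact---so there is nothing to compare against; your write-up would serve as a valid self-contained justification.
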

\begin{prop}\label{prop:Lipschitz h}
For (\ref{eq:m}), in each $i$-th update, $L_i = 4\sigma_{max}(\mH_i\mH_i^T - \mM) + 8 \sigma_{max}(\mH_i^T\mH_i)$. 
\end{prop}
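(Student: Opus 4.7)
The plan is to establish the claimed $L_i$ by bounding the operator norm of the Hessian of $f$ at $\mH_i$, since this operator norm is exactly what controls the Lipschitz constant of $\nabla f$ locally around $\mH_i$ and hence lets Lemma~\ref{lemma:L} apply to the $i$-th step.

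First I would differentiate $\nabla f(\mH) = 4(\mH\mH^T\mH - \mM\mH)$ one more time in an arbitrary direction $\mD$. A direct product-rule computation gives
\begin{equation*}
\nabla^2 f(\mH_i)[\mD] = 4\bigl[(\mH_i\mH_i^T - \mM)\mD + \mD\mH_i^T\mH_i + \mH_i\mD^T\mH_i\bigr].
\end{equation*}
Then I would bound each of the three summands in Frobenius norm using the standard submultiplicativity inequalities $\|\mA\mB\|_F \leq \sigma_{\max}(\mA)\|\mB\|_F$ and $\|\mA\mB\|_F \leq \|\mA\|_F\,\sigma_{\max}(\mB)$. The first term contributes $\sigma_{\max}(\mH_i\mH_i^T-\mM)\|\mD\|_F$; the second contributes $\sigma_{\max}(\mH_i^T\mH_i)\|\mD\|_F$; and for the third, writing it as $\mH_i(\mD^T\mH_i)$ and applying the bound twice yields $\sigma_{\max}(\mH_i)^2\|\mD\|_F = \sigma_{\max}(\mH_i^T\mH_i)\|\mD\|_F$. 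Summing and multiplying by $4$ gives $\|\nabla^2 f(\mH_i)[\mD]\|_F \leq L_i\|\mD\|_F$, i.e.\ $\|\nabla^2 f(\mH_i)\|_{\mathrm{op}} \leq L_i$.

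To connect this Hessian bound to the Lipschitz form required by Lemma~\ref{lemma:L}, I would use the fundamental theorem of calculus, $\nabla f(\mH)-\nabla f(\mG) = \int_0^1 \nabla^2 f(\mG + s(\mH-\mG))[\mH-\mG]\,ds$, so that $\|\nabla f(\mH)-\nabla f(\mG)\|_F \leq \sup_{s\in[0,1]}\|\nabla^2 f(\mG+s(\mH-\mG))\|_{\mathrm{op}}\cdot\|\mH-\mG\|_F$. Combined with the previous bound, this delivers a local Lipschitz constant for $\nabla f$ along the segment emanating from $\mH_i$, which is exactly what the descent lemma needs at iteration $i$ to guarantee monotone decrease with the stepsize $t = 1/(2L_i)$ chosen in Algorithm~\ref{alg:alg}.

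The main obstacle is the state-dependence of $L_i$: because $f$ is quartic in $\mH$, no global Lipschitz-gradient constant exists, and a Hessian bound at the single point $\mH_i$ does not literally give a Lipschitz bound on a nontrivial segment. Justifying that $L_i$ nevertheless controls the Lipschitz behavior along the segment from $\mH_i$ to $\mH_{i+1}$ is the delicate part; it is essentially why the stepsize is chosen as $t = 1/(2L_i)$ rather than $1/L_i$, providing enough slack that the supremum of $\|\nabla^2 f\|_{\mathrm{op}}$ over the short segment stays within a constant factor of $L_i$ and the descent-lemma argument carries through.
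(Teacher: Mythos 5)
Your Hessian computation is correct and reproduces exactly the paper's constant: $\nabla^2 f(\mH_i)[\mD] = 4\bigl[(\mH_i\mH_i^T-\mM)\mD + \mD\mH_i^T\mH_i + \mH_i\mD^T\mH_i\bigr]$, and the three operator-norm bounds sum to $L_i$. The paper reaches the same $L_i$ by a slightly different second-order route: it expands $f(\mH+\Delta\mH)-f(\mH)-\langle\Delta\mH,\nabla f(\mH)\rangle$ directly and bounds the quadratic part $2\langle\mH\mH^T-\mM,\Delta\mH\Delta\mH^T\rangle+\|\mH\Delta\mH^T+\Delta\mH\mH^T\|_F^2$ by $\bigl[2\sigma_{max}(\mH\mH^T-\mM)+4\sigma_{max}(\mH^T\mH)\bigr]\|\Delta\mH\|_F^2=\tfrac{L_i}{2}\|\Delta\mH\|_F^2$; that is, it bounds the quadratic form $\tfrac12\langle\mD,\nabla^2 f(\mH_i)[\mD]\rangle$ rather than the operator norm $\|\nabla^2 f(\mH_i)[\mD]\|_F$. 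The quadratic-form bound is in principle the tighter of the two and plugs directly into the form required by Lemma~\ref{lemma:L} without the fundamental-theorem-of-calculus detour, but both deliver the same constant here, so the difference is largely cosmetic.

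The obstacle you flag at the end is genuine, and you should be aware that the paper does not resolve it either: the displayed ``equality'' in the paper's proof silently discards the third- and fourth-order terms $2\langle\mH\Delta\mH^T+\Delta\mH\mH^T,\Delta\mH\Delta\mH^T\rangle+\|\Delta\mH\Delta\mH^T\|_F^2$ of the quartic $f$, and the quartic term is nonnegative, so it cannot simply be dropped from the left-hand side of a $\leq$. Consequently $L_i$ as stated bounds the second derivative only at the single point $\mH_i$; it is not by itself a Lipschitz constant for $\nabla f$ on the segment from $\mH_i$ to $\mH_{i+1}$. Your proposed fix---that the factor $2$ in $t=1/(2L_i)$ supplies enough slack---is not a proof: the neglected remainder scales like $\|\Delta\mH\|_F^3$ and $\|\Delta\mH\|_F^4$ and is controlled only after one establishes an a priori bound on $\|\mH_{i+1}-\mH_i\|_F$ or on the iterates themselves, which neither you nor the paper does. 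In short, your derivation of the constant is as rigorous as the paper's own, the gap you identify is real rather than an artifact of your route, and closing it requires a boundedness argument on the step, not a stepsize heuristic.
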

\begin{proof}[Proof of Proposition~\ref{prop:Lipschitz h}]
For sake of simplicity, we denote $\mH$ as $\mH_i$. With Lemma~\ref{lemma:L}, it is equivalent to show $f(\mH') - f(\mH) - \langle \nabla f(\mH), \mH'-\mH\rangle \leq \frac{L}{2}\|\mH'-\mH\|_F^2$. By denoting $\mH'$ as $\mH + \Delta \mH$, we have:
\begin{equation}\small
\label{eq:hessian}
\begin{split}
& \ \ \ f(\mH+\Delta \mH) - f(\mH) - \langle\Delta \mH, \nabla f(\mH)\rangle \\
&= 2 \langle \mH\mH^T - \mM, \Delta \mH\Delta \mH^T\rangle + \|\mH\Delta \mH^T + \Delta \mH \mH^T\|_F^2 \\
& \leq 2(\langle \mH\mH^T - \mM, \Delta \mH\Delta \mH^T\rangle + \|\mH\Delta \mH^T\|_F^2 +\|\Delta \mH \mH^T\|_F^2 )\\
&= 2tr(\Delta \mH^T(\mH\mH^T - \mM)\Delta \mH) + 4tr(\Delta \mH \mH^T \mH\Delta \mH^T)\\
& \leq \lbrack 2\sigma_{max}(\mH\mH^T - \mM) + 4 \sigma_{max}(\mH^T\mH)\rbrack \|\Delta \mH\|_F^2=\frac{L}{2} \|\Delta \mH\|_F^2,
\end{split}
\end{equation}
where $\sigma_{max}(\cdot)$ denotes the maximum singular value. 
\end{proof}

\begin{figure*}[t]
	\centering
	\includegraphics[width=.33\linewidth, height=4.8cm]{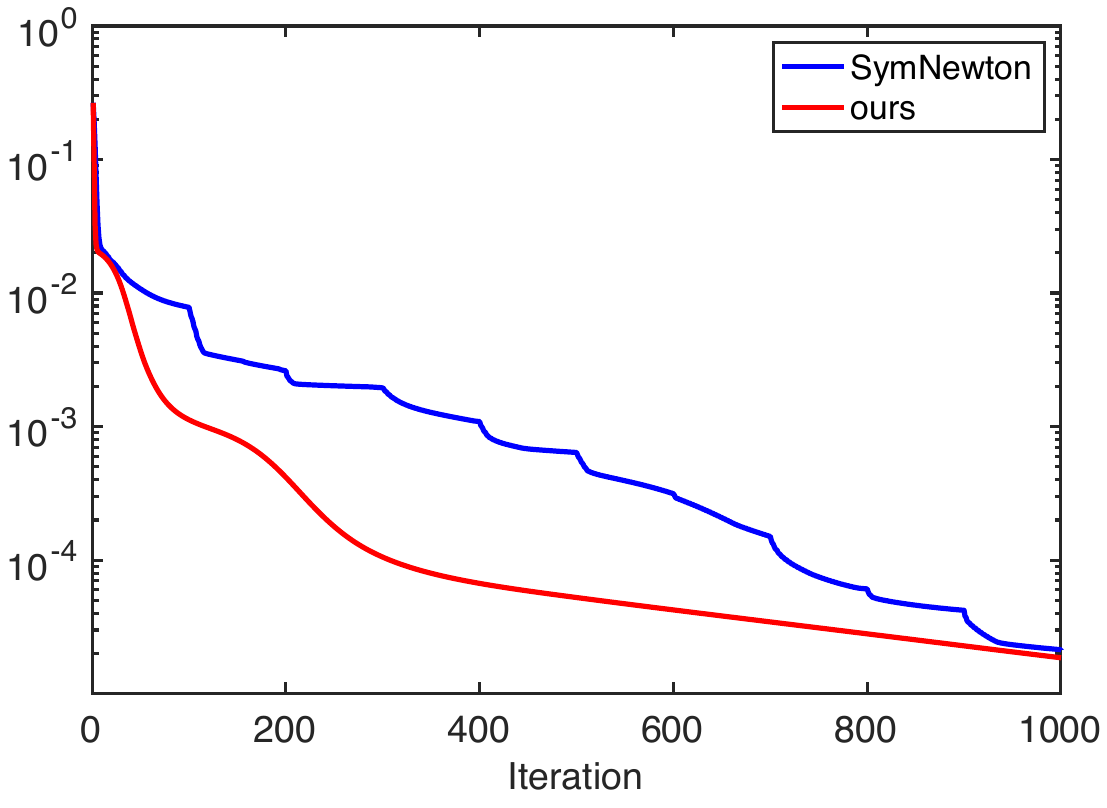}\hfill
	\includegraphics[width=.33\linewidth, height=4.8cm]{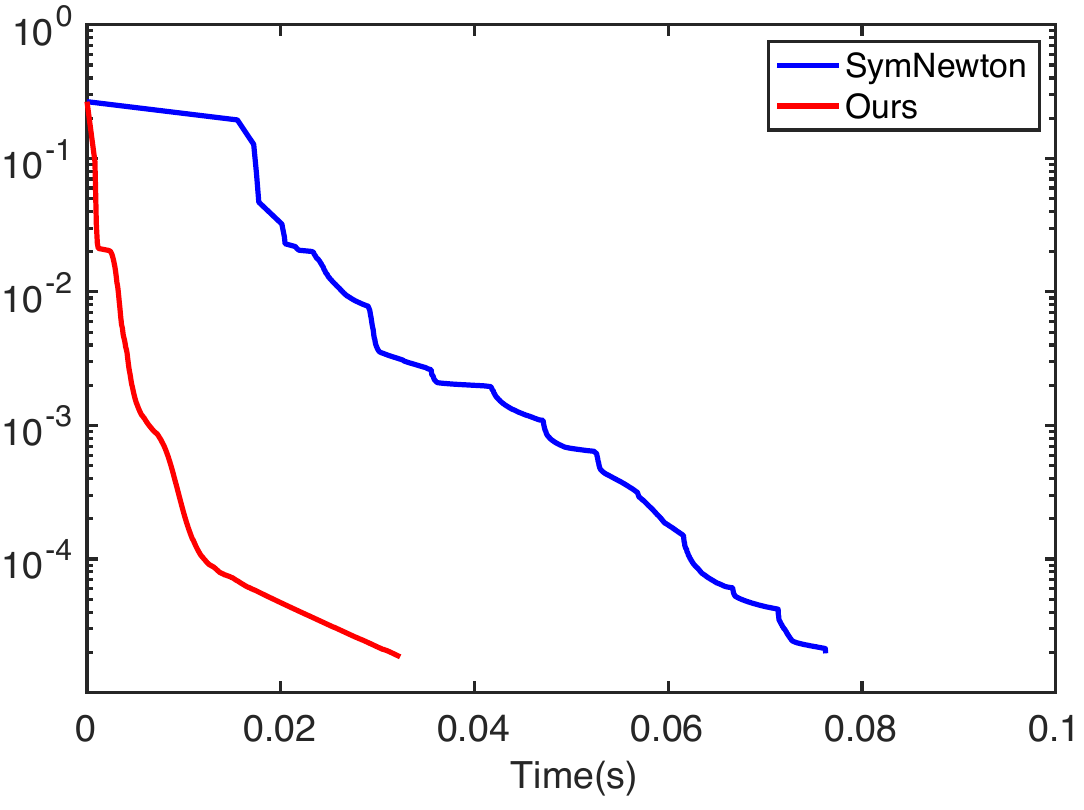}\hfill
    \includegraphics[width=.33\linewidth, height=4.8cm]{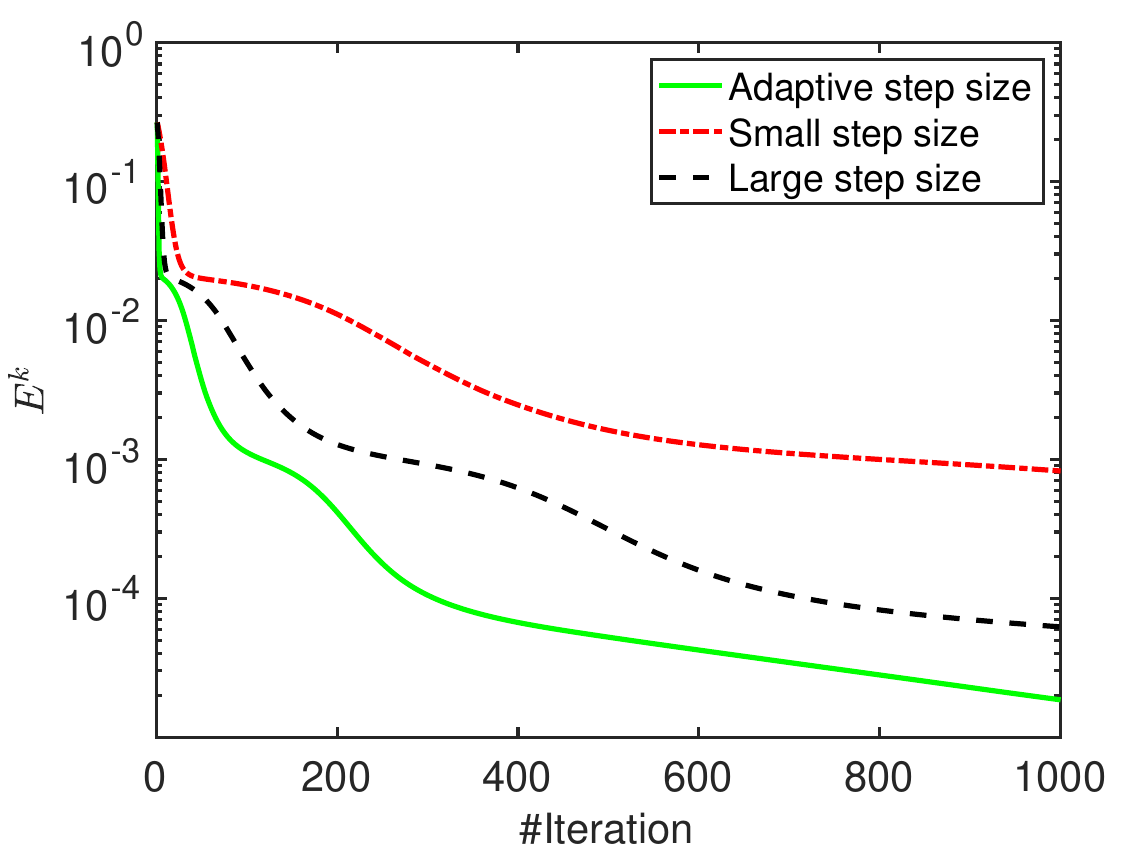}
	\caption{Convergence comparison \wrt \ $E^k := \frac{\|\mM-\mH^k(\mH^{k})^T\|_F^2}{\|\mM\|_F^2}$ . \textbf{Left:} objective value versus iteration numbers. \textbf{Middle:} objective value versus time. \textbf{Right:} projected gradient descent with different step-size settings.}
	\label{fig:convergerate}
\end{figure*}
\begin{figure*}[h!]
	\centering
	\includegraphics[width=.33\linewidth]{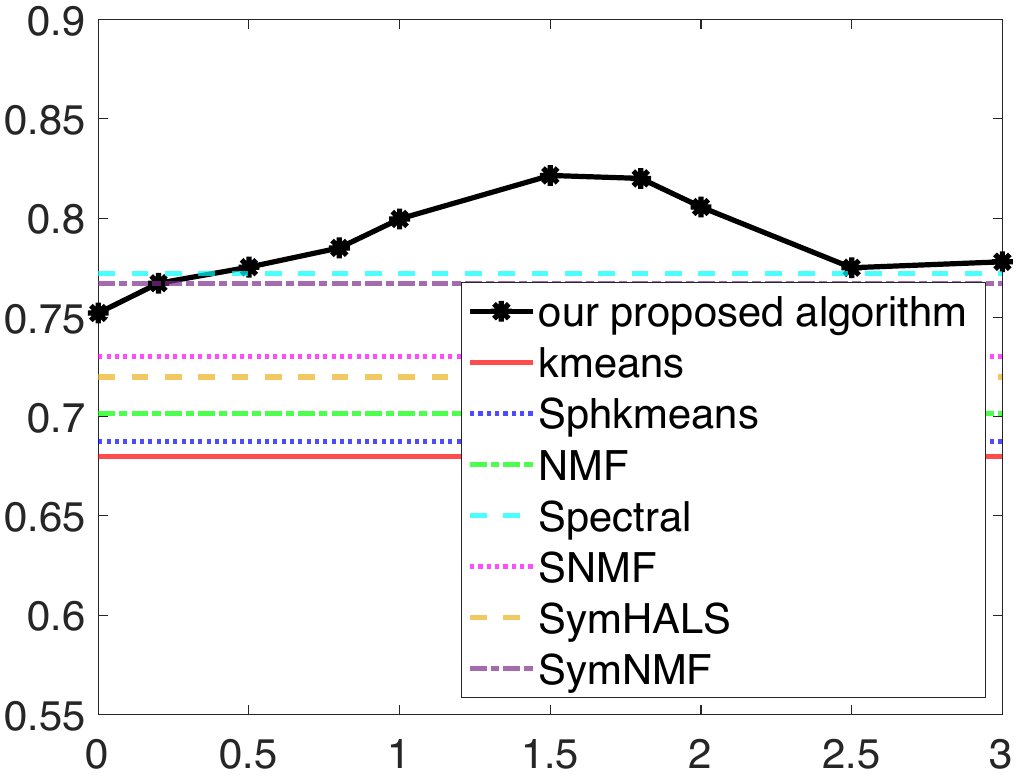}\hfill
	\includegraphics[width=.33\linewidth]{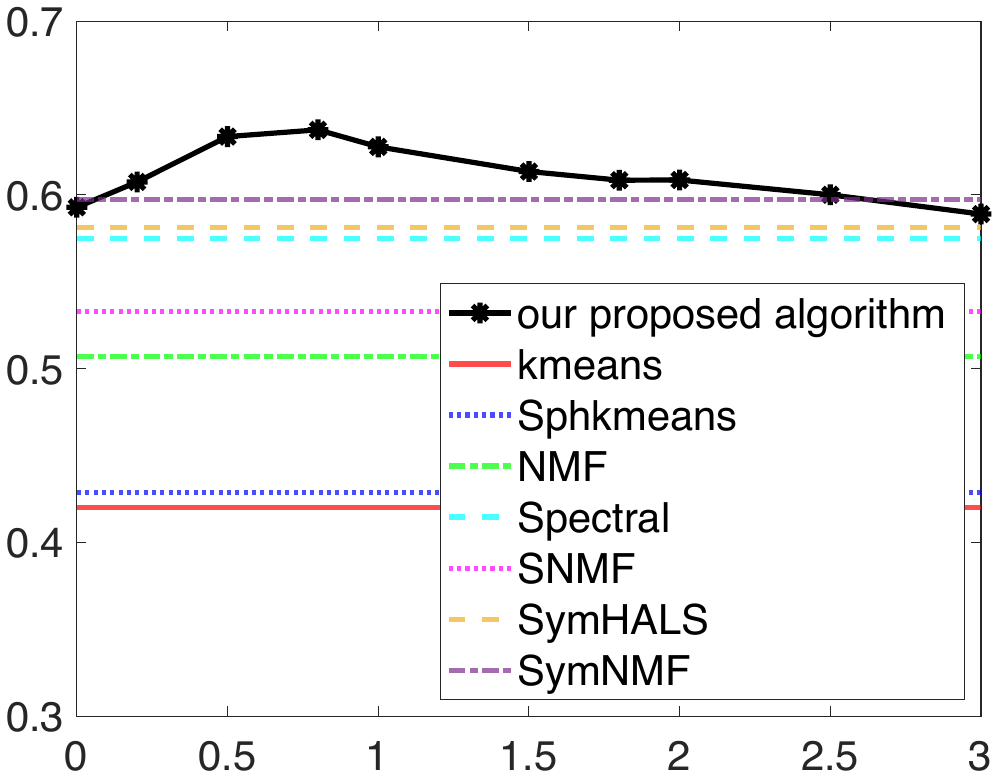}\hfill
	\includegraphics[width=.33\linewidth]{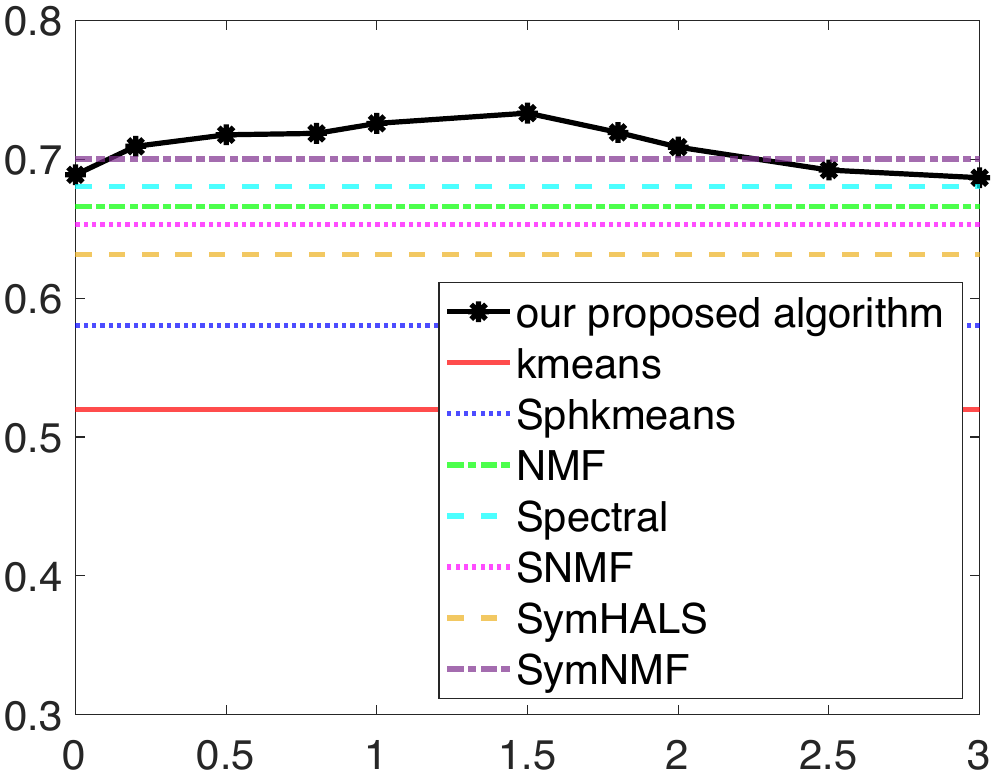}
	\caption{Accuracy (Y-axis) comparison with $\lambda$ changes (X-axis). \textbf{Left:} on a subset of 5 clusters from COIL-20. \textbf{Middle:} on a subset of 12 clusters from Reuters-21578. \textbf{Right:} on a subset of 10 clusters from TDT2.}
	\label{fig:lambda}
\end{figure*}
Below we show that the objective function $f(\mH)$ in Algorithm~\ref{alg:alg} has sufficient decreasement in each update with step size $t = \frac{1}{2L}$ and the generated sequence is convergent. 

\begin{thm}
Let $g(\mH_k):=f(\mH_k)+\mathcal{C}(\mH_k)$ be the objective function sequence generated by Algorithm~\ref{alg:alg} with constant step size $t_k = \frac{1}{2L_k}$. Then the sequence $g(\mH_k)$ obeys sufficient decrease:
$g(\mH_{k-1}) - g(\mH_{k})\geq \frac{L_k}{2}\| \mH_{k}-\mH_{k-1}\|_F^2$.
\label{thm:convergence}
\end{thm}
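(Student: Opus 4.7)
The plan is to read the projection in Algorithm~\ref{alg:alg} as the proximal operator of the indicator function $\mathcal{C}$ of the feasible set $\setH$, so the update is equivalent to
\begin{equation*}
\mH_k \;=\; \argmin_{\mH} \; \langle \nabla f(\mH_{k-1}), \mH - \mH_{k-1}\rangle + \tfrac{1}{2t_k}\|\mH - \mH_{k-1}\|_F^2 + \mathcal{C}(\mH).
\end{equation*}
With $t_k = 1/(2L_k)$, the quadratic penalty has coefficient $L_k$, which is \emph{twice} the coefficient that the quadratic majorant of $f$ supplies via Proposition~\ref{prop:Lipschitz h}. That extra factor of two is precisely the source of the slack that turns monotone decrease into sufficient decrease.

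First I would invoke the optimality of $\mH_k$ against the feasible point $\mH_{k-1}$ in the prox subproblem above. Since $\mathcal{C}(\mH_{k-1}) = 0$ and the cross term and quadratic vanish at $\mH = \mH_{k-1}$, this gives
\begin{equation*}
\langle \nabla f(\mH_{k-1}), \mH_k - \mH_{k-1}\rangle + L_k\|\mH_k - \mH_{k-1}\|_F^2 + \mathcal{C}(\mH_k) \;\leq\; \mathcal{C}(\mH_{k-1}).
\end{equation*}
Next I would apply Lemma~\ref{lemma:L} in the form already derived in the proof of Proposition~\ref{prop:Lipschitz h} (the Taylor-remainder estimate is the quadratic upper bound we need, with constant $L_k/2$) to obtain
\begin{equation*}
f(\mH_k) \;\leq\; f(\mH_{k-1}) + \langle \nabla f(\mH_{k-1}), \mH_k - \mH_{k-1}\rangle + \tfrac{L_k}{2}\|\mH_k - \mH_{k-1}\|_F^2.
\end{equation*}

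Summing the two inequalities cancels the inner product, and the net coefficient on $\|\mH_k - \mH_{k-1}\|_F^2$ becomes $\tfrac{L_k}{2} - L_k = -\tfrac{L_k}{2}$. Rearranging yields $g(\mH_{k-1}) - g(\mH_k) \geq \tfrac{L_k}{2}\|\mH_k - \mH_{k-1}\|_F^2$, as claimed.

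The main point of care is that Proposition~\ref{prop:Lipschitz h} is an iterate-dependent quadratic bound rather than a global gradient-Lipschitz statement, so the constant $L_k$ in the descent lemma must match the base point $\mH_{k-1}$ at which the quadratic expansion is taken. I would clarify the indexing convention (i.e., that $L_k$ is the Lipschitz-type constant used to generate $\mH_k$ from $\mH_{k-1}$) so that the Proposition~\ref{prop:Lipschitz h} bound applies verbatim; beyond that, the argument is a routine proximal-gradient sufficient-decrease computation and requires no further estimates.
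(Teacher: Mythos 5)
Your proposal is correct and follows essentially the same route as the paper's own proof: you compare $J(\mH_k)\leq J(\mH_{k-1})$ in the prox subproblem (using $\mathcal{C}(\mH_{k-1})=0$), add the quadratic upper bound from Lemma~\ref{lemma:L}/Proposition~\ref{prop:Lipschitz h} with constant $\tfrac{L_k}{2}$, and let the factor-of-two in $t_k=\tfrac{1}{2L_k}$ supply the $\tfrac{L_k}{2}\|\mH_k-\mH_{k-1}\|_F^2$ slack. Your remark about matching the iterate-dependent constant to the base point $\mH_{k-1}$ is a fair observation about the paper's indexing, but it does not change the argument.
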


\begin{proof}[Proof of Theorem~\ref{thm:convergence}]
    From (\ref{eq:h}), noting that $\mH_k$ minimizes $J(\mH) =\langle  \nabla f(\mH_{k-1}), \mH-\mH_{k-1}\rangle + \frac{1}{2t} \|\mH-\mH_{k-1}\|_F^2+\mathcal{C}(\mH)$, where $\mathcal{C}(\mH)$ represents any constraint $\mathcal{C}(\mH)=
    \begin{cases}
    	0, \mH\in\setH\\
    	\infty, \text{ else} 
    \end{cases}$, thus we naturally have $J(\mH_k) \leq J(\mH_{k-1})$, which implies:
    \begin{equation}\begin{split}
        & \langle  \nabla f(\mH_{k-1}), \mH_k-\mH_{k-1}\rangle  + \frac{1}{2t} \|\mH_k-\mH_{k-1}\|_F^2 \\ \leq &\mathcal{C}(\mH_{k-1})-\mathcal{C}(\mH_k).
    \label{eq:sufi}
    	\end{split}
    \end{equation}
    According to the definition of Lipschitz continuous, Lemma.~\ref{lemma:L}, and when $t = \frac{1}{2L_k}$ in (\ref{eq:sufi}), we have:
    \begin{equation}
    \begin{aligned}
        & f(\mH_k)+\mathcal{C}(\mH_k)-f(\mH_{k-1})-\mathcal{C}(\mH_{k-1}) \\
         \leq &\langle  \nabla f(\mH_{k-1}), \mH_k-\mH_{k-1}\rangle  + \frac{L_k}{2}\|\mH_k-\mH_{k-1}\|_F^2\\
         &+\mathcal{C}(\mH_k)-\mathcal{C}(\mH_{k-1})\\
         \leq &-L_k \|\mH_k-\mH_{k-1}\|_F^2+ \frac{L_k}{2}\|\mH_k-\mH_{k-1}\|_F^2\\
         = &-\frac{L_k}{2}\|\mH_k-\mH_{k-1}\|_F^2.
    \end{aligned}
    \label{eq:conv}
    \end{equation}
\end{proof}
One can see that if $\mH_k\in\setH$, then (\ref{eq:conv}) degenerates to $f(\mH_{k-1}) - f(\mH_{k})\geq \frac{L_k}{2}\| \mH_{k}-\mH_{k-1}\|_F^2$. Therefore, (\ref{eq:m}) decreases with update. Now repeating for all $k$ will give
	$\frac{\bar{L}}{2}\sum_{k=1}^\infty\|\mH_{k} - \mH_{k-1}\|_F^2 \leq g(\mH_0)$,
	where $\bar{L}:=\min\{L_1,\dots,L_k\}>0$ and establishes its convergence.
\begin{thm}
In Algorithm \ref{alg:alg}, to ensure $\min \|\nabla f(\mH)+\partial\mathcal{C}(\mH)\|^2_F \leq \epsilon$, we need at most $T = O(\frac{1}{\epsilon})$ iterations. 
\label{thm:rate}
\end{thm}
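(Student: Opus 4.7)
The plan is to combine Theorem~\ref{thm:convergence} (sufficient decrease) with the first-order optimality condition of the proximal update in~\eqref{eq:h} to bound the stationarity measure $\|\nabla f(\mH_k)+\partial\mathcal{C}(\mH_k)\|_F^2$ by the iterate increment $\|\mH_k-\mH_{k-1}\|_F^2$, and then telescope to obtain the $O(1/T)$ rate.

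The first step is to telescope $g(\mH_{k-1})-g(\mH_k)\ge \frac{L_k}{2}\|\mH_k-\mH_{k-1}\|_F^2$ from Theorem~\ref{thm:convergence} over $k=1,\dots,T$. Since $f\ge 0$ and $\mathcal{C}$ is the indicator of $\setH$, the sequence $\{g(\mH_k)\}$ is bounded below by some $g^\star\ge 0$; writing $\bar L:=\min_{1\le k\le T}L_k$, the telescope yields
\begin{equation*}
\sum_{k=1}^T \|\mH_k-\mH_{k-1}\|_F^2 \;\le\; \frac{2\bigl(g(\mH_0)-g^\star\bigr)}{\bar L},
\end{equation*}
so at least one index $k^\star\le T$ satisfies $\|\mH_{k^\star}-\mH_{k^\star-1}\|_F^2\le \frac{2(g(\mH_0)-g^\star)}{\bar L\,T}$.

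The second step converts this small increment into approximate stationarity. The minimizer of the proximal subproblem defining $\mH_k$ with step size $t_k=1/(2L_k)$ is characterized by
\begin{equation*}
0\in \nabla f(\mH_{k-1})+2L_k(\mH_k-\mH_{k-1})+\partial\mathcal{C}(\mH_k),
\end{equation*}
so the element
\begin{equation*}
\xi_k \;:=\; \nabla f(\mH_k)-\nabla f(\mH_{k-1})-2L_k(\mH_k-\mH_{k-1})
\end{equation*}
belongs to $\nabla f(\mH_k)+\partial\mathcal{C}(\mH_k)$. Using Proposition~\ref{prop:Lipschitz h} to get $\|\nabla f(\mH_k)-\nabla f(\mH_{k-1})\|_F\le L_k\|\mH_k-\mH_{k-1}\|_F$ together with the triangle inequality gives $\|\xi_k\|_F\le 3L_k\|\mH_k-\mH_{k-1}\|_F$. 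Setting $L_{\max}:=\max_{k\le T}L_k$ and combining with the first step,
\begin{equation*}
\min_{k\le T}\bigl\|\nabla f(\mH_k)+\partial\mathcal{C}(\mH_k)\bigr\|_F^2 \;\le\; \frac{18\, L_{\max}^2 \bigl(g(\mH_0)-g^\star\bigr)}{\bar L\, T} \;=\; O(1/T),
\end{equation*}
so forcing the left-hand side below $\epsilon$ requires $T=O(1/\epsilon)$.

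The main obstacle I anticipate is controlling the iterate-dependent constants $L_k=4\sigma_{\max}(\mH_{k-1}\mH_{k-1}^T-\mM)+8\sigma_{\max}(\mH_{k-1}^T\mH_{k-1})$: a genuine $O(1/T)$ rate needs both $L_{\max}<\infty$ and $\bar L>0$ along the trajectory, which is not automatic for the quartic $f$. However, Theorem~\ref{thm:convergence} keeps every $\mH_k$ in the sublevel set $\{\mH:g(\mH)\le g(\mH_0)\}$, which bounds $\|\mH_k\mH_k^T\|_F$ in terms of $g(\mH_0)$ and $\|\mM\|_F$ and thus uniformly bounds $\sigma_{\max}(\mH_k\mH_k^T)$ and $\sigma_{\max}(\mH_k^T\mH_k)$. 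Establishing this \emph{a priori} boundedness of the trajectory is the key technical step; once it is in hand, the telescoping argument above closes the proof.
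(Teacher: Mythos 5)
Your proposal is correct and follows essentially the same route as the paper's proof: use the first-order optimality condition of the proximal step to exhibit an element of $\nabla f(\mH_k)+\partial\mathcal{C}(\mH_k)$ bounded by $3L_k\|\mH_k-\mH_{k-1}\|_F$, then telescope the sufficient-decrease inequality from Theorem~\ref{thm:convergence} to get the $O(1/T)$ bound on the minimum. The only difference is that you track the iterate-dependent constants $L_k$ (and flag the need for uniform bounds $\bar L>0$, $L_{\max}<\infty$ along the trajectory), a point the paper's proof glosses over by treating $L$ as a single constant.
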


This indicates that the algorithm reaches a critical point at least a sub-linear convergence rate. Below is the proof:
\begin{proof}
    First, by definition: 
\e
\mH^+= \argmin_{\mH'} \langle \nabla f(\mH), \mH'-\mH\rangle + \frac{1}{2t} \|\mH'-\mH\|_F^2+\mathcal{C}(\mH'),
\ee

which implies:
\e\label{criticalpintcondition}
0\in \nabla f(\mH) +\frac{1}{t}(\mH^+-\mH)+\partial \mathcal{C}(\mH^+).
\ee
WLOG, now we define $\mA_{\mH^+}\in \nabla f(\mH^+)+\partial \mathcal{C}(\mH^+)$, apparently it indicates the gap to $\mH^*$ which satisfies $0 \in \nabla f(\mH^*)+\partial \mathcal{C}(\mH^*)$. Accordingly:
\e
\small
	\mA_{\mH^+}\in \nabla f(\mH^+)+\partial \mathcal{C}(\mH^+)
	= \nabla f(\mH^+)- \nabla f(\mH)-\frac{1}{t}(\mH^+-\mH).
\ee

As $f$ is $L$-Lipschitz gradient continuous, then: 

\e\begin{split}
	\|\mA_{\mH^+}\|&\leq \|\nabla f(\mH^+)- \nabla f(\mH)\|+\frac{1}{t}\|\mH^+-\mH\|\\
	&\leq L \|\mH^+-\mH\|+\frac{1}{t}\|\mH^+-\mH\|\\
	&=3L\|\mH^+-\mH\|,
\end{split}
\ee
where the first line comes from subadditivity inequality, the second line is by definition of Lipschitz gradient continuity, while the last line is by definition of $t=\frac{1}{2L}$.

On the other hand, $f(\mH)-f(\mH^+)\ge\frac{L}{2}\|\mH^+-\mH\|^2_F$, therefore:
\e\begin{split}
	\|\mA_{\mH^+}\|^2_F&\leq 9L^2\|\mH^+-\mH\|^2_F\\
	&\le 9L^2*\frac{2}{L}(f(\mH)-f(\mH^+))\\
	&=18L(f(\mH)-f(\mH^+)).
\end{split}
\ee
By repeating the above for all $k$:
\e
\sum_{i} \|\mA_i\|_F^2\le18L(f(\mH_0)-f(\mH_{k}))\le 18Lf(\mH_0).
\ee
Thus, $\min \|\mA_i\|_F^2\le \frac{18Lf(\mH_0)}{T}$, that is as long as $T=\frac{18Lf(\mH_0)}{\epsilon}=\mathcal{O}(\frac{1}{\epsilon})$, then $\min \|\mA_i\|_F^2=\epsilon$ which finishes the proof. Apparently, our algorithm has a at least sub-linear convergence rate.
\end{proof}

We compare our proposed method with the nonnegative constraint on $\mH$ using projected gradient descent and the Newton-like method in SymNMF~\cite{kuang2012symmetric} in terms of convergence speed on synthetic data in Fig.~\ref{fig:convergerate}. When run with the adaptive step-size setting, it converges way faster than others.
\begin{table}[h!]
	\normalsize
\caption{Datasets Information}
	\begin{tabular}{*{4}{c}}
		\toprule
		\multicolumn{1}{c}{Dataset}  &
		\multicolumn{1}{c}{\#Clusters}  &
		\multicolumn{1}{c}{\#Samples} &
		\multicolumn{1}{c}{Dimensionality} \\
		\midrule
		COIL-20& 20 & 1440 & 1024\\
		CIFAR-10& 10 & 3000 & 1024\\
		Reuters-21578& 41 & 8213 & 18933\\
		TDT2& 30 & 9394 & 36771\\
		\bottomrule
	\end{tabular}
	
	\label{tab:data}
\end{table}
\section{Experiments}\label{sec:exp}
\label{exp}

\begin{table*}[t!]
\caption{Normalized mutual information (NMI) of different algorithms on four datasets with varying numbers of clusters}
	\small
	\begin{center}	
		\begin{tabular}{*{13}{c}}
			\toprule
			\multicolumn{1}{c}{}  &
			\multicolumn{3}{c}{COIL-20}  &
			\multicolumn{3}{c}{CIFAR-10} &
			\multicolumn{3}{c}{Reuters-21578} &
			\multicolumn{3}{c}{TDT2}  \\
			\cmidrule(r){1-1}\cmidrule(r){2-4}\cmidrule(r){5-7}\cmidrule(r){8-10}\cmidrule(r){11-13}
			{Method} & {2} & {10} & {20} & {3} & {6} & {10} & {2} & {8} & {15} & {2} & {10} & {20}\\
			\midrule
			$K$-means & 0.901 & 0.624 & 0.591 & 0.296 & 0.287 & 0.201 & 0.785 &0.553 &0.421 & 0.752 & 0.532 & 0.501\\
			NMF & 0.907 & 0.729 & 0.522 & 0.308 & 0.288 & 0.195 & 0.819 &0.752 &0.598 & 0.822 & 0.666 & 0.600\\
			Spectral & 0.877 & 0.701 & 0.677 & 0.309 & 0.298 & 0.201 & 0.828 &0.611 & 0.499 & 0.829 & 0.607 & 0.592\\
			SymHALS & 0.911 & 0.688 & 0.652 & 0.308 & 0.255 & 0.198 & 0.855 & 0.631 &0.552 & 0.822 & 0.611 & 0.588\\
			SNMF & 0.911 & 0.659 & 0.638 & 0.311 & 0.302 & 0.188 & 0.872 &0.566 &0.531 & 0.751 & 0.689 & 0.603\\
			SymNMF & 0.951 & 0.739 & 0.662 & 0.307 & 0.289 & 0.209 & 0.897 & 0.692 &0.605 & 0.802 & 0.671 & 0.662\\
			GNMF & 0.951 & 0.701 & 0.652 & 0.319 & 0.301 & 0.217 & 0.852 &0.772 &0.618 & 0.798 & 0.699 & 0.652\\
			DSC & 0.949 & 0.752 & 0.701 & 0.318 & 0.302 & 0.297 & 0.876 &0.779 &0.682 & 0.851 & 0.691 & 0.664\\
			\textbf{Ours} & \textbf{0.958} & \textbf{0.797} & \textbf{0.725} & \textbf{0.341} & \textbf{0.319} & \textbf{0.302} & \textbf{0.901} &\textbf{0.798} &\textbf{0.755} & \textbf{0.861} & \textbf{0.729} & \textbf{0.705}\\
			\bottomrule
		\end{tabular}
	\end{center}
	\label{tab:nmi}
\end{table*}

\begin{table*}[t!]
\caption{Clustering accuracy (AC) of different algorithms on four datasets with varying numbers of clusters}
	\small
	\begin{center}	
		\begin{tabular}{*{13}{c}}
			\toprule
			\multicolumn{1}{c}{}  &
			\multicolumn{3}{c}{COIL-20}  &
			\multicolumn{3}{c}{CIFAR-10} &
			\multicolumn{3}{c}{Reuters-21578} &
			\multicolumn{3}{c}{TDT2}  \\
			\cmidrule(r){1-1}\cmidrule(r){2-4}\cmidrule(r){5-7}\cmidrule(r){8-10}\cmidrule(r){11-13}
			{Method} & {2} & {10} & {20} & {3} & {6} & {10} & {2} & {8} & {15} & {2} & {10} & {20}\\
			\midrule
			$K$-means & 0.921 & 0.674 & 0.631 & 0.316 & 0.297 & 0.221 & 0.815 &0.563 &0.503 & 0.800 & 0.581 & 0.533\\
			NMF & 0.923 & 0.765 & 0.586 & 0.330 & 0.306 & 0.198 & 0.900 &0.777 &0.616 & 0.839 & 0.693 & 0.629\\
			Spectral & 0.898 & 0.737 & 0.702 & 0.332 & 0.316 & 0.228 & 0.889 &0.645 &0.513 & 0.896 & 0.639 & 0.602\\
			SymHALS & 0.927 & 0.703 & 0.682 & 0.321 & 0.287 & 0.219 & 0.881 &0.658 &0.575 & 0.851 & 0.639 & 0.601\\
			SNMF & 0.932 & 0.686 & 0.653 & 0.328 & 0.312 & 0.202 & 0.890 &0.586 &0.552 & 0.781 & 0.703 & 0.630\\
			SymNMF & 0.972 & 0.772 & 0.695 & 0.321 & 0.305 & 0.231 & 0.911 &0.721 &0.627 & 0.823 & 0.703 & 0.689\\
			GNMF & 0.968 & 0.722 & 0.683 & 0.343 & 0.326 & 0.232 & 0.871 &0.802 &0.628 & 0.825 & 0.715 & 0.676\\
			DSC & 0.972 & 0.788 & 0.722 & 0.339 & 0.321 & 0.318 & 0.903 &0.811 &0.703 & 0.863 & 0.720 & 0.687\\
			\textbf{Ours} & \textbf{0.975} & \textbf{0.815} & \textbf{0.787} & \textbf{0.369} & \textbf{0.346} & \textbf{0.328} & \textbf{0.922} &\textbf{0.823} &\textbf{0.782} & \textbf{0.878} & \textbf{0.756} & \textbf{0.736}\\
			\bottomrule
		\end{tabular}
	\end{center}
	
	\label{tab:result}
\end{table*}

\subsection{Datasets}

Two image datasets and two text datasets are used in the experiment: COIL-20~\cite{coil20}, CIFAR-10~\cite{krizhevsky2009learning}, Reuters-21578~\cite{r21}, and TDT2~\cite{tdt}. Detailed descriptions of the number of clusters, number of samples, and dimensionality of these datasets can be found in Table~\ref{tab:data}.

\subsection{Experimental Settings}
Clustering performances of the following 9 algorithms are compared:
\begin{enumerate}
    \item Standard $K$-means;
    \item NMF using alternating nonnegative least squares algorithm~\cite{kim2008nonnegative}; The data matrix $\mX$ is transformed into its normalized-cut weighted version;
    \item Spectral clustering (Spectral)~\cite{ng2002spectral,von2007tutorial};
    \item Hierarchical Alternating Least Squares (HALS) for symmetric NMF (SymHALS)~\cite{zhu2018dropping};
    \item Symmetric NMF using Procrustes rotations (SNMF)~\cite{huang2013non};
    \item Symmetric NMF (SymNMF)~\cite{kuang2012symmetric};
    \item Graph regularized nonnegative matrix factorization (GNMF)~\cite{cai2010graph};
    \item Deep subspace clustering (DSC)~\cite{ji2017deep};
    \item Our method. Algorithm~\ref{alg:alg} is used to solve the objective function (\ref{eq:m}).
\end{enumerate}


In order to randomize the experiments, we conduct the evaluation using subsets of the whole datasets with different cluster numbers. For each selected number of clusters $K$, 10 test runs are conducted on a randomly chosen subset with $K$ clusters. When $K$ is the total number of clusters in the complete data set, the test runs are repeated on the entire data set. The symmetric matrix $\mA$ can be obtained by utilizing any similarity measures, for simplicity we use the inner product similarity. Throughout the experiments, we use Matlab R2019a on a laptop with a 1.4 GHz QuadCore Intel Core i5 processor.

\subsection{Results and Analysis}

The clustering quality is measured by normalized mutual information (NMI)~\cite{lancichinetti2009detecting}, a measurement of similarity from information theory, and clustering accuracy (AC), the percentage of items correctly clustered with the maximum bipartite matching~\cite{10.1145/860435.860485}. 
AC is defined as follows:
\begin{equation}
	\label{eq:ac}
	AC = \frac{\sum_{i=1}^{n}\delta(r_i, map(l_i))}{n},
\end{equation}
where  $l_i$ is the obtained cluster label, $r_i$ is the original provided label, $n$ is the number of total samples, $\delta(x,y)$ equals $1$ when $x=y$ and equals $0$ otherwise, and $map(l_i)$ is the permutation function that maps each $l_i$ to the equivalent cluster label provided via Hungarian algorithm. 

 Fig.~\ref{fig:lambda} shows how the clustering accuracy of our method varies with different values of $\lambda$. The performance is not changing dramatically with respect to the parameter $\lambda$, and our method has consistently ideal performance if $\lambda$ is within a reasonable range. It's reasonable to observe that the optimal value of $\lambda$ is slightly dependent on the data since the dimensionality and magnitude of the data can all have some effect on it. Experiment results of normalized mutual information and clustering accuracy on the four datasets are shown in Table~\ref{tab:nmi} and Table~\ref{tab:result}. 
We report the mean of NMI and AC for each given cluster number $K$ over 10 test runs, the highest accuracy for each $K$ is highlighted. We can see that for both image data and text data, our proposed method can always achieve the best clustering performance among all the methods, and the improvement is significant, both in NMI and AC. One potential reason that the performance on CIFAR-10 is not that good as on COIL-20 may be the images from CIFAR-10 have more complex and varying backgrounds than the images from COIL-20. Although as the number of clusters increases, all methods' clustering performance is getting worse, our method is relatively stable with an increasing number of clusters compared to other methods. 
\section{Conclusion}\label{conclusion}
In this paper, we study the symmetric matrix factorization problem with a regularization term. We propose an efficient column-wise update rule and provide a general framework that can be extended to solve symmetric matrix factorization problems with various constraints. We prove the convergence rate with theoretical analysis. The results of extensive experiments on real-world data sets validate the effectiveness of our algorithm and its superiority in data clustering tasks.

\bibliographystyle{plain}
\bibliography{icdm22}
\end{document}